\newcommand{\cmark}{\ding{51}}
\newcommand{\xmark}{\ding{55}}
\newtheorem{Def}{Definition}
\newtheorem{theorem}{Theorem}
\newtheorem*{theorem*}{Theorem}
\newtheorem*{lemma*}{Lemma}
\newtheorem*{proof*}{Proof}
\title{Guided Diffusion Model for Adversarial \\
Purification from Random Noise}
\author{%
  Quanlin Wu \\
  Yuanpei College\\
  Peking University\\
  \texttt{quanlin@pku.edu.cn} \\
  % examples of more authors
  \And
  Hang Ye \\
  Yuanpei College\\
  Peking University\\
  \texttt{yehang@pku.edu.cn} \\
  \And
  Yuntian Gu \\
  Yuanpei College\\
  Peking University\\
  \texttt{guyuntian@stu.pku.edu.cn} \\
  % \And
  % Coauthor \\
  % Affiliation \\
  % Address \\
  % \texttt{email} \\
  % \And
  % Coauthor \\
  % Affiliation \\
  % Address \\
  % \texttt{email} \\
}
\begin{document}

\maketitle
\begin{abstract}
In this project, we propose a novel guided diffusion purification approach to provide a strong defense against adversarial attacks. Our model achieves $89.62\%$ robust accuracy under PGD-$\ell_\infty$ attack ($\epsilon=8/255$) on the CIFAR-10 dataset. We first explore the essential correlations between unguided diffusion models and randomized smoothing, enabling us to apply the models to certified robustness. The empirical results show that our models outperform randomized smoothing by $5\%$ when the certified $\ell_2$ radius $r$ is larger than  $0.5$. 
\end{abstract}

\section{Introduction}
While deep neural networks have demonstrated remarkable capabilities in complicated tasks, it has been noticed by the community that they are vulnerable to adversarial attacks \cite{goodfellow2014explaining, szegedy}. Specifically, altering images with slight perturbations that are imperceptible to humans, can mislead trained neural networks to unexpected predictions, which poses a security threat in real-world scenarios. Various kinds of defense strategies have been proposed to protect DNN-based classifiers from adversarial attacks. Among them, \emph{adversarial training} \cite{pgd} has become a widely-used defense form. Despite their effectiveness, most adversarial training methods fail against suitably powerful and unseen attacks. In addition, it often requires higher computational complexity for training.

Another class of defense methods, often termed with \emph{adversarial purification}, aimed at using the generative models to recover corrupted examples in a pre-processing manner. Recently, denoising diffusion probabilistic models \cite{ddpm} have shown impressive performance as powerful generative models, beating GANs in image generation \cite{beatGAN}. Both works \cite{guide, diff_puri} propose a novel purification approach based on diffusion models. In sharp contrast with prevalent generative models, including GANs \cite{gan} and VAEs \cite{vae}, diffusion models define two processes: (i) a forward diffusion process that converts inputs into Gaussian noises step by step, and (ii) a reverse generative process recovering clean images from noises. Leveraging diffusion models for adversarial purification is a natural fit, as the adversarial perturbations will be gradually smoothed and dominated by the injected Gaussian noises. 
In addition, the stochasticity of diffusion models provides a powerful defense against adversarial attacks.

\textbf{We summarize our main contributions as follows:}
\begin{itemize}
\item Inspired by recent works, we propose a novel guided diffusion-based approach to purify adversarial images. Empirical results demonstrate the effectiveness of our method.
\item  We reveal the fundamental correlations between the unguided diffusion-model-based adversarial purification and randomized smoothing, enabling a provable defense mechanism.
\item  We present a theoretical analysis of the diffusion process in adversarial purification, which can provide some insights into its properties.
\item 
We also include denoising diffusion implicit models (DDIM) and study the trade-off between the inference speed and robustness.
\end{itemize}

\section{Backgrounds}

\subsection{Adversarial Robustness}

\textbf{Adversarial Training.} Adversarial training is an empirical defense method, which involves adversarial examples during the training of neural networks. Let $g_\phi: \mathcal{X} \rightarrow \mathcal{Y}$ denote the target classifier. Worst-case risk minimization can be formulated as the following saddle-point problem
\begin{equation}
\min_{\phi}\mathbb{E}_{(x, y) \sim p_{\text{data}}} [\max_{x' \in B(x)} \mathcal{L}(g_\phi(x'),y)]
\end{equation}
where $\mathcal{L}$ is a loss function and $\mathcal{B}(x)$ is the set of allowed perturbations in the neighborhood of $x$.

\textbf{Adversarial Purification.} Recently, leveraging generative models to purify the images from adversarial perturbations before classification has become a promising counterpart of adversarial training. In this manner, neither assumptions on the form of attacks nor the architecture details of classifiers are required. The generative models for purification can be trained independently and paired with standard classifiers, which is less time-consuming compared with adversarial training. We just need to slightly modify the formula above to represent adversarial purification
\begin{equation}
\min_{\phi, \theta}\mathbb{E}_{(x, y) \sim p_{\text{data}}} [\max_{x' \in B(x)} \mathcal{L}(g_\phi(f_\theta(x'),y)]
\end{equation}
where $f_\theta$ denotes the generative models as a pre-processor. \cite{defensegan} propose defense-GAN, \cite{pixel} utilize pre-trained PixelCNN for purification and \cite{yoon2021adversarial} rely on denoising score-based model to remove adversarial perturbations. More recently, diffusion models have been applied to adversarial purification \cite{diff_puri, guide}. We base our works on \cite{diff_puri, guide} and make attempts to improve the efficiency of this defense mechanism. Furthermore, our models can be generalized to provide certified guarantees of robustness.

\subsection{DDPM}
Generally speaking, diffusion models approximate a target distribution $q(x)$ by reversing a gradually Gaussian diffusion. \textit{Denoising diffusion probabilistic models} (DDPMs\cite{ddpm}) define a Markov diffusion process formulated by  
\begin{equation}
q(x_{1:T}|x_0) := \prod_{t=1}^T q(x_t|x_{t-1}), \quad
q(x_t|x_{t-1}):= \mathcal{N}(x_t; \sqrt{1-\beta_t} x_{t-1}, \beta_t \boldsymbol{I})
\end{equation}

where $\{\beta_t \in (0,1)\}_{t=1}^T$. In the diffusion process, $x_t$ can be seen as a mixture of $x_{t-1}$ and a Gaussian noise, which means the data sample $x_0$ gradually loses its distinguishable features and finally becomes a random noise. By define $\alpha_t = 1-\beta_t, \bar{\alpha}_t = \prod_{s=1}^t \alpha_s$, we can get a closed form expression of $q(x_t|x_0)$ like
\begin{equation} \label{diff}
q(x_t|x_0) = \mathcal{N}(x_t;\sqrt{\bar{\alpha}_t} x_0, (1-\bar{\alpha}_t)\boldsymbol{I})
\end{equation} 
And then the model constructs a reverse process by learning to predict a "slightly less noised" $x_{t-1}$ given $x_t$. Let $x_T$ be a latent variable
(usually a random noise), a sample of the target distribution $q(x_0)$ can by got through a Markov process defined as
\begin{equation}
p_\theta(x_{0:T}) := p(x_T)\prod_{t=1}^T p_\theta(x_{t-1}|x_t), \quad p_\theta(x_{t-1}|x_t):= \mathcal{N}(x_{t-1}; \mu_\theta(x_t, t), \Sigma_\theta(x_t, t))
\end{equation}
According to Ho et al.\cite{ddpm}, the mean $\mu_\theta(x_t,t)$ is a neural network parameterized by $\theta$ while the variance $\Sigma_\theta(x_t, t))$
is a set of time-dependent constants.

\subsection{DDIM}
Although diffusion models can generate high-quality samples, they have a critical drawback in generative speed because they require many iterations to
reverse the diffusion process. \cite{ddim} accelerate the generation by changing the Markov process to a non-Markovian one. They rewrite the reverse process with a desired variation $\sigma^2_t$.

\begin{equation}
x_{t-1} =  \sqrt{\bar{\alpha}_{t-1}}(\dfrac{x_t - \sqrt{1-\bar{\alpha}_t} \epsilon_\theta(x_t, t)}{\sqrt{\bar{\alpha}_t}}) + 
\sqrt{1-\bar{\alpha}_{t-1}-\sigma_t^2} \cdot \epsilon_\theta(x_t, t) + \sigma_t \epsilon_t
\end{equation}

Without re-training a DDPM, we can accelerate the inference by only sampling a subset of $S$ diffusion steps $\{\tau_1, \tau_2,\cdots, \tau_S\}$ where $\tau_1<\tau_2<\cdots<\tau_S\in[T]$ and $S<T$.

\begin{equation}
x_{\tau_{i-1}} = \sqrt{\bar{\alpha}_{\tau_{i-1}}}(\dfrac{x_{\tau_i} - \sqrt{1-\bar{\alpha}_{\tau_i}} \epsilon_\theta(x_{\tau_i}, \tau_i)}{\sqrt{\bar{\alpha}_{\tau_i}}}) + 
\sqrt{1-\bar{\alpha}_{\tau_{i-1}}-\sigma_{\tau_i}^2} \cdot \epsilon_\theta(x_{\tau_i}, \tau_i) + \sigma_{\tau_i} \epsilon
\end{equation}

The desired variation $\{\sigma_t^2\}_{t=1}^T$ is controlled by a hyper-parameter $\eta \in \mathbb{R}_{\geq 0}$

\begin{equation}
   \sigma_t(\eta) = \eta\sqrt{ (1-\bar{\alpha}_{t-1})/(1-\bar{\alpha}_{t})}\sqrt{1-\bar{\alpha}_{t}/\bar{\alpha}_{t-1}} 
\end{equation}

Note that the DDPM reverse process is a special case of the DDIM reverse process ($\eta$ = 1). And when $\eta=0$ the denoising processes become deterministic and such a model is named \textit{denoising diffusion implicit model} (DDIM \cite{ddim}).

\section{Methods}

\subsection{Diffusion Purification}\label{guide}
We now present the details of our method. As the diffusion length of the forward process $T$ increases, the adversarial perturbations will be dominated by added Gaussian noises. And the reverse process can be viewed as a purification process to recover clean images. Therefore, the restoring step is fundamental for adversarial robustness. Most importantly, we need to deal with the tension between retaining the semantic content of clean images and filtering the adversarial perturbations.

\begin{figure}[b]
\centering
\includegraphics[width=0.7\linewidth]{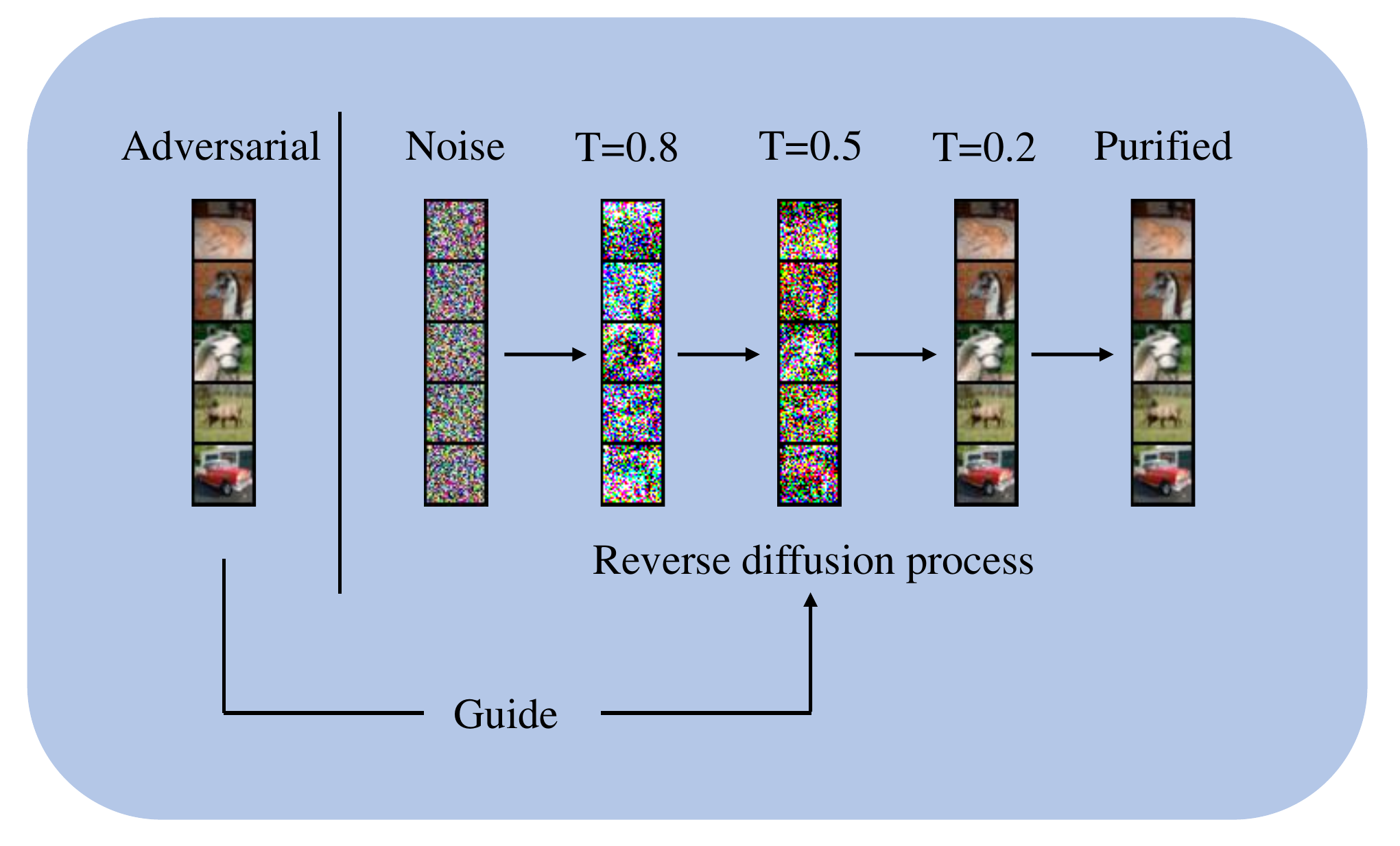}
\caption{An illustration of our diffusion purification methods.}
\label{pipeline}
\end{figure}

Both works \cite{guide, diff_puri} feed the diffused adversarial inputs into the reverse process. \cite{diff_puri} regards the reverse process as solving stochastic differential equations (SDE). In our works, we propose a novel approach inspired by conditional image generation \cite{beatGAN}. We sample the initial input $x_T$ from \textbf{pure Gaussian noises} and gradually denoise 
it with the guidance of the adversarial image $x_0'$. The intuition here is that the diffused image $x_T'$ still carries corrupted structures, the reverse process is likely to get stuck in the local blind spots, which are susceptible to adversarial attacks. On contrary, if the magnitude of the guidance is suitably adjusted, strong enough to recover semantic contents but not too large to resemble the adversarial image, starting from pure Gaussian noises often leads to better results.

Now we explain the guidance mechanism. Let $x_0'$ be the adversarial input, the conditional backward process can be defined as
\begin{equation}
p_{\theta, \phi}(x_{t-1}|x_t, x_0') = Z_1p_\theta (x_{t-1}|x_t)p_\phi(x_0'|x_{t-1})
\end{equation}
where $Z_1$ is a normalizing factor. It has been proved in \cite{beatGAN}
 that we can approximate the distribution using a Gaussian with shifted mean
\begin{equation}
p_\theta (x_{t-1}|x_t)p_\phi(x_0'|x_{t-1}) = \mathcal{N}(\mu+\Sigma \nabla_{x_t} \log p_\phi(x_0'|x_t), \Sigma)
\end{equation}

Following \cite{control, guide}, we use a heuristic formulation here. First we sample the diffused outputs $x_t'$ with the adversarial image $x_0'$ according to \ref{diff}. Suppose $\mathcal{D}$ is a distance metric and $s$ is a scaling factor controlling the magnitude of guidance, we have
\begin{equation}
p_\phi(x_0'|x_t) = Z_2\exp(-s\mathcal{D}(x_t, x_t'))
\end{equation}

Therefore, the mean of the conditional distribution is shifted by $-s\Sigma \nabla_{x_{t}} \mathcal{D}(x_{t}, x_{t}')$. Apparently, introducing an extra neural network, which generates feature embeddings to measure the distances between images, is still vulnerable to white-box attacks. So we adopt the simple Mean Square Error as the distance metric $\mathcal{D}$ in our experiments. As for the guidance scale $s$, we follow the formula used in \cite{guide}. Suppose the adversarial image $x_0' = x_0 + \Delta x$, we have 
\begin{equation}
    x_t' = \sqrt{\bar\alpha_t} x_0 + \sqrt{\bar\alpha_t} \Delta x + \sqrt{1-\bar\alpha_t} \epsilon
\end{equation}
The scaling factor $s$ should be time-dependent, proportional to the ratio of the magnitude of Gaussian noises to adversarial perturbations. A higher ratio suggests that the adversarial perturbation gets dominated by Gaussian noise, thus it's safe to increase the strength of the guidance signal. Suppose the perturbation $\Delta x$ is bounded by $\ell_\infty$ norm $r$, We define $s_t$ using the following formula
\begin{equation}
    s_t = a \cdot \dfrac{\sqrt{1-\bar\alpha_t}}{r \sqrt{\bar\alpha_t}}
\end{equation}
where $a$ is an empirically determined hyperparameter. Above all, the pseudo-code of our method is shown in Algorithm \ref{our_alg}. An illustration of our method is shown in Fig. \ref{pipeline}. 

\begin{algorithm}[tb]
   \caption{Guided Adversarial Purification Using Diffusion Models}
\begin{algorithmic} \label{our_alg}
   \STATE {\bfseries Input:} an adversarial image $x_0'$, diffusion length $T$, distance metric $\mathcal{D}$, scaling factor $s$
   \STATE $x_T \leftarrow$ sample from $\mathcal{N}(0, \boldsymbol{I})$
   \FOR{$t=T$ {\bfseries to} $1$}
   \STATE $x_t' \leftarrow$ sample from $\mathcal{N}( \sqrt{\bar{\alpha}_t} x_0', (1-\bar{\alpha}_t)\boldsymbol{I})$
   \STATE $\mu, \Sigma \leftarrow \mu_\theta(x_t, t), \sigma_t^2\boldsymbol{I}$
   \STATE $x_{t-1} \leftarrow$ sample from $\mathcal{N}(\mu -s\Sigma \nabla_{x_t} \mathcal{D}(x_t, x_t'), \Sigma)$
   \ENDFOR
   \RETURN $x_0$
\end{algorithmic}
\end{algorithm} 

\subsection{Certifying \texorpdfstring{$\ell_2$}. Robustness} \label{robust}
Now we shift our attention to the unguided purification model proposed in \cite{diff_puri}. As pointed out by \cite{diff_puri,guide}, we can purify the adversarial inputs multiple times to gradually eliminate the perturbations. Let $M$ be the number of purification runs, we restate the algorithm as shown in pseudo-code \ref{unguided}.  One main contribution of our works is to explore the inherent connections between \textbf{unguided} diffusion models and randomized smoothing \cite{smooth}. We discover that a provable defense mechanism becomes plausible based on diffusion purification. However, we leave certifying the robustness of the proposed guided diffusion model for future works.

Before diving into the certified $\ell_2$ robustness, we first analyze the properties of the method \ref{unguided} to provide a deeper understanding.
Following the key concepts in \emph{Differential Privacy}, we characterize the robustness of our models using $(\epsilon, \delta)$ notations. 

\begin{Def}[$(\epsilon,\delta)$-Robustness]
	\label{label}
    Let $\mathcal{A}:\mathcal{X} \rightarrow \mathcal{Y}$ be a randomized algorithm, we say $\mathcal{A}$ satisfies $(\epsilon, \delta)$\text{-} Robustness under $\ell_2$ radius $r$, for a training sample $(x,y)$ and $\forall ||\Delta x||_2 < r$, if we have
\begin{equation}
\mathbb{P}(\mathcal{A}(x+\Delta x) \neq y) < e^\epsilon\cdot \mathbb{P}(\mathcal{A}(x) \neq y) + \delta
\end{equation}
\end{Def}

\begin{algorithm}[t]
   \caption{Unguided Adversarial Purification Using Diffusion Models}
\begin{algorithmic} \label{unguided}
   \STATE {\bfseries Input:} an adversarial image $x_0'$, diffusion length $T$, purification run $M$
   \FOR{$i= 1$ {\bfseries to} $M$}
   \STATE $x_T' \leftarrow$ sample from $\mathcal{N}( \sqrt{\bar{\alpha}_T} x_0', (1-\bar{\alpha}_T)\boldsymbol{I})$
   \FOR{$t=T$ {\bfseries to} $1$}
   \STATE $\mu, \Sigma \leftarrow \mu_\theta(x_t', t), \sigma_t^2\boldsymbol{I}$
   \STATE $x_{t-1}' \leftarrow$ sample from $\mathcal{N}(\mu, \Sigma)$
   \ENDFOR
   \ENDFOR
   \RETURN $x_0'$
\end{algorithmic}
\end{algorithm} 

Now we list the notations used in the following arguments. Let $Q: \mathcal{X}  \rightarrow \mathcal{X} $ and $R: \mathcal{X}  \rightarrow \mathcal{X} $ denote the diffusion process and the reverse process. And we have a base classifier $G:\mathcal{X} \rightarrow \mathcal{Y}$, which maps the purified images to the predicted labels. In our framework, the overall randomized algorithm $\mathcal{A}$ can be decomposed as $\mathcal{A} = G\circ R \circ Q$. We use $\Phi$ to denote the standard Gaussian CDF and $\Phi^{-1}$ for its inverse. We also stick to the notations in the last section, i.e. $x_0$ represents the clean image while $x_0'$ denotes the adversarial one.

\begin{theorem}  \label{theorem1}
Our randomized algorithm $\mathcal{A}$ satisfies $(\epsilon, \delta)$\text{-}Robustness under $\ell_2$ radius $r$, where $\delta = \Phi(\frac{r\sqrt{\bar{\alpha}_T}} {2(1-\bar{\alpha}_T)} - \frac{\epsilon}{r\sqrt{\bar{\alpha}_T}})$.
\end{theorem}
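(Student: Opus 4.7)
The plan is to recognize that the forward diffusion $Q$ is exactly a Gaussian mechanism in the differential-privacy sense, and then to transport the resulting bound through the rest of the pipeline via the post-processing property. By equation \ref{diff}, $Q(x_0) \sim \mathcal{N}(\sqrt{\bar{\alpha}_T}\,x_0, (1-\bar{\alpha}_T)\boldsymbol{I})$; hence for $x_0$ and $x_0' = x_0 + \Delta x$ with $\|\Delta x\|_2 \le r$, the two distributions $Q(x_0)$ and $Q(x_0')$ are isotropic Gaussians with shared covariance $(1-\bar{\alpha}_T)\boldsymbol{I}$ whose means are separated by at most $\sqrt{\bar{\alpha}_T}\,r$ in $\ell_2$.

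The first main step is to establish an $(\epsilon,\delta)$-style bound for $Q$ alone. I would rotate coordinates so that $\Delta x$ lies along a single axis, reducing the problem to a one-dimensional two-point Gaussian testing instance with mean separation at most $\sqrt{\bar{\alpha}_T}\,r$ and noise variance $1-\bar{\alpha}_T$. Computing the privacy-loss random variable (the log density ratio evaluated at a sample from $Q(x_0')$) shows that it is itself Gaussian, with mean and variance expressed in terms of the sensitivity $\Delta = \sqrt{\bar{\alpha}_T}\,r$ and noise scale $\sigma^2 = 1-\bar{\alpha}_T$. Taking its upper $\epsilon$-tail and expressing the result through the standard normal CDF $\Phi$ yields an expression of exactly the form stated in the theorem.

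The second step is to lift this guarantee for $Q$ to the full algorithm $\mathcal{A} = G \circ R \circ Q$. Fix a label $y$, let $S_y := \{z : G(z) \neq y\}$, and define the deterministic $[0,1]$-valued function $h(x_T) := \mathbb{P}(R(x_T) \in S_y)$, where the probability is over the internal randomness of $R$. Using the layer-cake formula $h(x_T) = \int_0^1 \mathbb{1}[h(x_T) > t]\,dt$ and applying the $(\epsilon,\delta)$ bound for $Q$ to each super-level set $\{x_T : h(x_T) > t\}$ gives $\mathbb{E}_{Q(x_0')}[h] \le e^\epsilon \mathbb{E}_{Q(x_0)}[h] + \delta$. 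Unpacking the two expectations into $\mathbb{P}(\mathcal{A}(x_0') \neq y)$ and $\mathbb{P}(\mathcal{A}(x_0) \neq y)$ yields precisely the definition of $(\epsilon,\delta)$-Robustness.

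The main obstacle is the first step: the tail computation must place the sensitivity $\sqrt{\bar{\alpha}_T}\,r$ both as the effective numerator of a $\Delta/\sigma^2$-type term and as the denominator of an $\epsilon/\Delta$ term, so a little care is required to land on the stated coefficients. The rotational reduction and the post-processing argument are standard and should go through essentially automatically, and in particular the stochasticity of $R$ only helps through post-processing and need not be analyzed explicitly.
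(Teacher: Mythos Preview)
Your proposal is correct and follows essentially the same route as the paper: the paper also identifies the ``bad'' set $S=\{x:q(x)\ge e^{\epsilon}p(x)\}$ (which is exactly your privacy-loss tail event), bounds $\mathbb{P}(Q(x+\Delta x)\in S)$ via the one-dimensional Gaussian projection onto the $\Delta x$ direction, and then invokes post-processing to pass from $Q$ to $\mathcal{A}=G\circ R\circ Q$. Your layer-cake treatment of the randomized $R$ is more explicit than the paper's one-line appeal to ``the post-processing theorem,'' but the underlying argument is the same.
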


Intuitively, a large diffusion length $T$ will help remove adversarial perturbation $\Delta x$. In addition, more purification iterations $M$ lead to a cleaner image after filtering. We have the following theorem

\begin{theorem}
\label{theorem2}
If we keep the radius $r$ and $\epsilon$ fixed, with the increase of diffusion length $T$ and purification iterations $M$, the escape probability $\delta$ will strictly decrease when $\delta > 0$, which provides a stronger robustness guarantee.
\end{theorem}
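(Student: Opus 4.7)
The plan is to reduce both monotonicity claims to a one-dimensional calculus exercise. Write $u = \sqrt{\bar\alpha_T}$ and
\begin{equation*}
g(u) = \frac{ru}{2(1-u^2)} - \frac{\epsilon}{ru},
\end{equation*}
so that Theorem \ref{theorem1} reads $\delta = \Phi(g(u))$. Since $\Phi$ is strictly increasing on all of $\mathbb{R}$, it suffices to show (i) that $u$ strictly decreases when either $T$ or $M$ grows, and (ii) that $g$ is strictly increasing on $(0,1)$.

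For the $T$ direction claim (i) is essentially immediate: since $\alpha_t = 1 - \beta_t \in (0,1)$, we have $\bar\alpha_{T+1} = \alpha_{T+1}\bar\alpha_T < \bar\alpha_T$, so $u$ shrinks strictly. For the $M$ direction, the key observation is that running the outer loop of Algorithm \ref{unguided} $M$ times is, for the purposes of the $(\epsilon,\delta)$-analysis, equivalent to a single iteration of an enlarged diffusion mechanism with effective signal-retention $\bar\alpha_T^M$: each additional pass injects an independent Gaussian forward step whose contribution folds into the Gaussian-mechanism bound, while the learned reverse process is randomized post-processing and preserves $(\epsilon,\delta)$-robustness. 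This reduction replaces $u$ by $u^M$, and since $u \in (0,1)$ we get $u^{M+1} < u^M$, i.e. strict decrease.

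For (ii), a direct computation gives
\begin{equation*}
g'(u) = \frac{r(1+u^2)}{2(1-u^2)^2} + \frac{\epsilon}{r u^2},
\end{equation*}
which is strictly positive for $u \in (0,1)$ whenever $r > 0$ and $\epsilon \geq 0$. Combining (i) and (ii) with the strict monotonicity of $\Phi$ yields that $\delta = \Phi(g(u))$ strictly decreases as $u$ decreases, hence as $T$ or $M$ grows. The hypothesis $\delta > 0$ is harmless here: it just pins us in the regime where $\Phi$ is non-degenerately strictly monotone, but since $\Phi > 0$ on $\mathbb{R}$ this is automatic.

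The main obstacle I anticipate is the justification of the $M$-fold reduction in the second paragraph. A clean argument needs to track how the Gaussian-mechanism bound propagates across repeated diffusion-then-reverse cycles, where the reverse map is nonlinear and learned. The safest route is to mirror the derivation of Theorem \ref{theorem1} on the iterated mechanism: after each reverse step invoke post-processing invariance of $(\epsilon,\delta)$-robustness, and then accumulate only the independent forward Gaussian contributions across the $M$ rounds so that the effective signal-to-noise parameter becomes $\bar\alpha_T^M$. Everything else is routine monotonicity.
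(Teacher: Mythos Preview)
Your $T$-direction argument is correct and genuinely different from the paper's. You read off the explicit bound $\delta=\Phi(g(\sqrt{\bar\alpha_T}))$ from Theorem~\ref{theorem1}, check that $g$ is strictly increasing on $(0,1)$, and use $\bar\alpha_{T+1}<\bar\alpha_T$. The paper instead proves a distributional monotonicity: for \emph{any} pair of densities $p_{t-1},q_{t-1}$ satisfying an $(\epsilon,\delta_{t-1})$ bound, convolving both with a Gaussian kernel yields $p_t,q_t$ with $\delta_t\le\delta_{t-1}$, strictly when $\delta_{t-1}>0$. Your formula-based route is shorter for the $T$ claim; the paper's convolution argument is more general and, crucially, is what makes the $M$ claim go through.

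Your $M$-direction argument has a real gap. The assertion that $M$ purification rounds are ``for the purposes of the $(\epsilon,\delta)$-analysis'' equivalent to a single Gaussian mechanism with parameter $\bar\alpha_T^M$ is not justified, and post-processing invariance cannot rescue it. Post-processing says that $f\circ\mathcal{M}$ inherits $(\epsilon,\delta)$ from $\mathcal{M}$; it does \emph{not} let you commute a nonlinear $f$ past a subsequent noise injection. Concretely, $Q\circ R\circ Q$ is not of the form $R'\circ(Q\circ Q)$ for any post-processing $R'$, because the learned reverse map $R$ sits between the two Gaussian steps and alters the input to the second one. The derivation of Theorem~\ref{theorem1} relies on the two input distributions differing by the deterministic shift $\sqrt{\bar\alpha_T}\,\Delta x$; after one reverse pass the clean and adversarial distributions are no longer shifts of one another, so you cannot rerun the Gaussian-mechanism calculation with a new ``radius'' and arrive at $u^M$. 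What \emph{does} work is exactly the paper's route: each forward step is a convolution that strictly shrinks the optimal $\delta$ (this needs no Gaussianity of the inputs), and each reverse step is post-processing that preserves it. Iterating gives strict decrease in $M$ without ever producing a closed-form like $\Phi(g(u^M))$.
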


The detailed proofs of theorem 1 and theorem 2 are presented in the appendix. We can also revisit the concept of robustness through a perspective of KL-divergence. 

\begin{theorem}
\label{theorem3}
    Let $p(x),  q(x)$ denote the probability density function of $\mathcal{A}(x_0)$ and $ A(x_0')$ respectively. $D_{KL}(p(x)||q(x))$ strictly decreases as the diffusion length $T$ and purification run $M$ increases.
\end{theorem}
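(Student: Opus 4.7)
The plan is to combine the data processing inequality (DPI) for KL divergence with the explicit Gaussian form of the forward diffusion in equation~(4). After $T$ forward-diffusion steps, both $Q(x_0)$ and $Q(x_0')$ are Gaussian with common covariance $(1-\bar{\alpha}_T)\boldsymbol{I}$, so
\begin{equation*}
D_{KL}\bigl(Q(x_0)\,\|\,Q(x_0')\bigr) \;=\; \frac{\bar{\alpha}_T\,\|x_0-x_0'\|^2}{2(1-\bar{\alpha}_T)}.
\end{equation*}
The remainder of $\mathcal{A}$ — the reverse chain $R$, the $M-1$ subsequent diffuse-and-reverse cycles, and the final classifier $G$ — is a (possibly randomized) measurable map applied to this pair of Gaussians, so DPI yields
\begin{equation*}
D_{KL}(p\,\|\,q) \;\le\; \frac{\bar{\alpha}_T\,\|x_0-x_0'\|^2}{2(1-\bar{\alpha}_T)}.
\end{equation*}

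The monotonicity in $T$ then follows because every $\alpha_t = 1-\beta_t \in (0,1)$, so $\bar{\alpha}_T$ is strictly decreasing in $T$ and hence so is $\bar{\alpha}_T/(1-\bar{\alpha}_T)$. For the monotonicity in $M$, I would view each extra purification run as post-composition of another random $R\circ Q$ onto the current pair of distributions; iterating DPI then shows that $D_{KL}(p\,\|\,q)$ cannot increase as $M$ grows. In both cases the quantity we are controlling is pushed down by the operations that the enlarged algorithm performs over and above the smaller one.

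The main obstacle is to upgrade these non-strict inequalities into the strict decrease asserted in the statement, since the textbook DPI only gives "$\leq$". The plan is to exploit the fact that each forward kernel $q(x_t|x_{t-1}) = \mathcal{N}(\sqrt{1-\beta_t}\,x_{t-1}, \beta_t\boldsymbol{I})$ with $\beta_t > 0$ is a strict KL-contraction on any pair of distinct source measures — a standard fact for Gaussian convolution — and analogously that each reverse step is strictly contractive whenever the variance $\sigma_t^2 > 0$. Incrementing $M$ inserts at least one extra such strict contraction into the chain defining $\mathcal{A}$; the case of $T$ requires a somewhat finer telescoping, writing $R_{T+1}\circ Q_{T+1}$ as $R_T \circ (r_{T+1}\circ q_{T+1}) \circ Q_T$ so that the longer chain is visibly the shorter chain with an additional noisy round-trip spliced in. Combining these observations with the same escape-probability bookkeeping used in the proof of Theorem~2 should yield strict decrease whenever $p\neq q$.
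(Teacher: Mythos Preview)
Your route via the data-processing inequality is genuinely different from the paper's. The paper passes to the continuous-time SDE description of both the forward and reverse processes and proves, via the Fokker--Planck equation, the identity $\partial_t D_{KL}(p_t\|q_t)=-\tfrac12\,g(t)^2\,D_F(p_t\|q_t)$, with $D_F$ the Fisher divergence; strict decrease then follows directly because $D_F>0$ whenever $p_t\neq q_t$. Your DPI argument stays in discrete time and is more elementary, and your plan to recover strictness from the fact that convolution with a nondegenerate Gaussian is a strict KL-contraction is sound in principle. For the dependence on $M$ both approaches work cleanly, since the extra $R\circ Q$ is genuine post-composition on the pair of output distributions.

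For the dependence on $T$, however, your argument has a real gap. The first step --- bounding $D_{KL}(p\|q)$ above by $\bar\alpha_T\|x_0-x_0'\|^2/\bigl(2(1-\bar\alpha_T)\bigr)$ and noting that this bound decreases in $T$ --- only shows that an \emph{upper bound} decreases, not the quantity itself. Your telescoping $R_{T+1}\circ Q_{T+1}=R_T\circ(r_{T+1}\circ q_{T+1})\circ Q_T$ is the right factorisation, but the inserted kernel sits in the \emph{middle} of the chain rather than at an end. DPI gives $D_{KL}(Ku_T\|Kv_T)<D_{KL}(u_T\|v_T)$ for $K=r_{T+1}\circ q_{T+1}$, $u_T=Q_T(x_0)$, $v_T=Q_T(x_0')$, but after pushing both pairs through the common tail $R_T$ you cannot conclude $D_{KL}(R_TKu_T\|R_TKv_T)<D_{KL}(R_Tu_T\|R_Tv_T)$: DPI contracts each input pair separately and does not preserve orderings across different input pairs. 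The paper's own proof is terse on this point as well, but its differential identity at least tracks $D_{KL}$ along the full reverse trajectory rather than only bounding it at one endpoint; to make the DPI route go through for $T$ you would need an additional monotonicity argument for how $R_T$ acts on the two different time-$T$ pairs.
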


Please refer to the appendix for the details. Note that the entire diffusion process $Q$ corresponds to adding Gaussian noises to the inputs, which is identical to randomized smoothing \cite{smooth}. And it's safe to regard the reverse process $R$ adjoined with the classifier $G$ as a randomized post-processing step.
Therefore, the certified guarantees in the original paper \cite{smooth} can be generalized directly to our methods. To the best of our knowledge, we first propose a verifiable adversarial defense via diffusion models. Similarly, we now define a smoothed classifier $\tilde{\mathcal{A}}$ with the following formula
\begin{equation}
\tilde{\mathcal{A}}(x) = \arg\max_{c \in \mathcal{Y}} \mathbb{P}(\mathcal{A}(x)=c)
\end{equation}

Now we present the theorem of our $\ell_2$ robustness guarantee: 
\begin{theorem}
Suppose $c_A \in \mathcal{Y}$ and $\underline{p_A}, \overline{p_B} \in [0,1]$ satisfy:
\begin{equation}
\mathbb{P}(\mathcal{A}(x) = c_A) \ge \underline{p_A} \ge \overline{p_B}
\ge \max_{c\not= c_A} \mathbb{P}(\mathcal{A}(x) = c)
\end{equation}
Then $\tilde{\mathcal{A}}(x+\Delta x) = c_A$ ~for all $\|\Delta x\|_2 < R$, where
\begin{equation}
    R = \dfrac{\sqrt{1-\bar{\alpha}_T}}{2}(\Phi^{-1}(\underline{p_A})-\Phi^{-1}(\overline{p_B}))
\end{equation}
\end{theorem}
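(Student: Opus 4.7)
The plan is to reduce the claim to the Cohen et al.\ randomized-smoothing certification theorem, using the factorization $\mathcal{A} = G \circ R \circ Q$ recalled just above. The operator $Q$ produces a sample from $\mathcal{N}(\sqrt{\bar{\alpha}_T}x, (1-\bar{\alpha}_T)\mathbf{I})$, so the distribution of $\mathcal{A}(x)$ is the pushforward of that Gaussian through the (possibly stochastic) map $G \circ R$ whose internal randomness is independent of $Q$. By the data-processing inequality, every Neyman--Pearson-type bound derived between $Q(x)$ and $Q(x+\Delta x)$ transfers automatically to the label distributions of $\mathcal{A}$, so the $R$ and $G$ pieces can be absorbed as post-processing without loss.

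Next I would invoke the standard Gaussian-smoothing consequence of the Neyman--Pearson lemma: for any measurable event $E$ in the label space,
\begin{equation*}
\mathbb{P}\bigl(\mathcal{A}(x+\Delta x) \in E\bigr) \le \Phi\!\Bigl(\Phi^{-1}\bigl(\mathbb{P}(\mathcal{A}(x)\in E)\bigr) + \tfrac{\|\Delta x\|_2}{\sqrt{1-\bar{\alpha}_T}}\Bigr),
\end{equation*}
with the matching lower bound obtained by flipping the sign in front of $\|\Delta x\|_2$. This inequality comes out of the observation that the worst-case decision region separating two translated isotropic Gaussians is a half-space, against which the pair of error probabilities is exactly $(\Phi(t),\Phi(-t))$ for a $t$ determined by the mean shift divided by the standard deviation of the smoothing noise.

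With this two-sided bound in hand, I would apply the lower version to $E=\{c_A\}$ and the upper version to $E=\{c_B\}$ for each competing class $c_B \neq c_A$. Using the hypotheses $\mathbb{P}(\mathcal{A}(x)=c_A) \ge \underline{p_A}$ and $\mathbb{P}(\mathcal{A}(x)=c_B) \le \overline{p_B}$, requiring the former to dominate the latter reduces, by monotonicity of $\Phi$, to
\begin{equation*}
\Phi^{-1}(\underline{p_A}) - \tfrac{\|\Delta x\|_2}{\sqrt{1-\bar{\alpha}_T}} > \Phi^{-1}(\overline{p_B}) + \tfrac{\|\Delta x\|_2}{\sqrt{1-\bar{\alpha}_T}},
\end{equation*}
i.e.\ $\|\Delta x\|_2 < R$. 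Taking the maximum over $c_B$ then gives $\tilde{\mathcal{A}}(x+\Delta x) = c_A$ for every such perturbation, which is the conclusion.

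The main obstacle I would anticipate is correctly accounting for the deterministic scaling $\sqrt{\bar{\alpha}_T}$ inside $Q$: the mean separation between $Q(x)$ and $Q(x+\Delta x)$ is $\sqrt{\bar{\alpha}_T}\|\Delta x\|_2$ rather than $\|\Delta x\|_2$, so the Neyman--Pearson step naturally wants to insert $\sqrt{\bar{\alpha}_T}\|\Delta x\|_2 / \sqrt{1-\bar{\alpha}_T}$ inside the $\Phi$ argument. The cleanest way to reconcile this with the stated $R$ is to change coordinates to $\tilde{x}=\sqrt{\bar{\alpha}_T}x$, apply Cohen's theorem verbatim on $\tilde{x}$ with noise standard deviation $\sqrt{1-\bar{\alpha}_T}$, and then translate the resulting radius back to the original scale; this coordinate-change bookkeeping is the routine but delicate step that fixes the exact constant appearing in $R$.
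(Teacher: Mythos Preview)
Your approach is the same as the paper's: the paper offers no detailed argument and simply remarks that the result is ``quite straightforward based on the conclusion in \cite{smooth}'', i.e., treat $G\circ R$ as post-processing of the Gaussian mechanism $Q$ and invoke Cohen et al.\ with noise scale $\sigma=\sqrt{1-\bar{\alpha}_T}$. Your Neyman--Pearson sketch is a correct fleshing-out of that one-line reduction.

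Your observation about the $\sqrt{\bar{\alpha}_T}$ scaling is right and is a point the paper glosses over. However, the coordinate change will not land on the stated $R$: carrying it through yields the certified radius
\[
\frac{\sqrt{1-\bar{\alpha}_T}}{2\sqrt{\bar{\alpha}_T}}\bigl(\Phi^{-1}(\underline{p_A})-\Phi^{-1}(\overline{p_B})\bigr)=\frac{R}{\sqrt{\bar{\alpha}_T}}.
\]
Since $\bar{\alpha}_T\in(0,1)$ this is \emph{larger} than the paper's $R$, so the theorem as stated still follows (it is a weakening of what your argument actually proves), but you should not expect the bookkeeping to recover the paper's constant exactly; the paper has simply dropped the favorable factor $1/\sqrt{\bar{\alpha}_T}$.
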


The proofs are quite straightforward based on the conclusion in \cite{smooth}. Fundamentally, we can interpret the reverse process $R$ plus the classifier $G$ as a powerful  ``noise'' classifier, which generates predictions given the diffused images. However, the reverse process is capable of recovering clean inputs gradually from tricky images, which are close to pure noises and lack evident semantic attributes visually. In other words, our methods are more \textbf{noise-tolerant} and robust compared with randomized smoothing. We can increase the scale of noises without worrying too much about losing semantic information, which helps maintain the accuracy of the base classifier. 

In addition, as the generative models are trained on the natural data, the corrupted images will be pushed towards the real distribution, eliminating the adversarial perturbations. Furthermore, classifiers utilizing randomized smoothing \cite{smooth} require Gaussian data augmentation in training while our diffusion models can be trained independently without compromising performance. Above all, our models are more likely to provide a better certified guarantee. The empirical studies in Sec.4 show that diffusion purification achieves better results than randomized smoothing.

\begin{table}[t]
  \caption{CIFAR-10 results against preprocessor-blind PGD $\ell_\infty$  attacks ($\epsilon = 8/255$).  The diffusion length $T = 1000$ with reverse steps $ 50$.}
  \label{result}
  \centering
  \begin{tabular}{cccc}
    \toprule
    \cmidrule(r){1-2}
    Methods &  Standard Acc   & Robust Acc   & Architecture \\
    \midrule
    Ours & $91.40$ & $89.62$ & ResNet-50 \\
    GDMP(SSIM) \cite{guide} & $93.50$ & $90.10$ & WRN-28-10 \\
    ADP($\sigma=0.1$) \cite{yoon2021adversarial}  & $93.09$ & $85.45$ & WRN-28-10 \\
    \citet{hill2020stochastic} & $84.12$ & $78.91$ & WRN-28-10 \\
    \citet{pgd} & $87.30$ & $70.20$ & ResNet-56\\
    \bottomrule
  \end{tabular}
\end{table}

\section{Experiments}
Our works are based on the official codes of diffusion models \cite{ddpm, improve}. And we implemented the guidance mechanism on our own. Due to the limitation of time, we only conduct several toy experiments using ResNet-50 \cite{resnet} classifier against PGD attacks \cite{pgd}. 

\subsection{Evaluation Results}
\noindent \textbf{Robust Accuracy.} Table \ref{result} shows the robustness  performance against $\ell_\infty$ threat model ($\epsilon = 8/255$) with PGD attack. Our methods utilizing ResNet-50 attain a robust accuracy of $89.62\%$, very close to the results reported in \cite{guide}. We will conduct additional experiments on architectures such as WideResNet and try stronger adaptive attacks like AutoAttack in the future.

\noindent \textbf{Certified Robustness.} In Sec.3.2 we emphasize that our main contribution is to apply the diffusion-model-based adversarial purification to certified robustness. We follow \cite{smooth} to use the binomial hypothesis test to compute the certified radius $R$ for test examples. We set $N_0 = 20, N = 1,000, \alpha = 0.001$ in our experiments. Please refer to the original paper for the definitions. For the fairness of comparison, we recompute the certified radius using the released codes \cite{smooth} in an identical setting. Since we sample fewer points than the original implementation, the lower bound on the certified radius is less tight. If a given example is correctly classified and the certified radius $R$ is larger than $r$, it will be counted to measure the approximate certified test accuracy. The results are shown in Table \ref{certified}. We set the diffusion length to $T=2000$, the corresponding variance is $\sqrt{1-\bar\alpha_T} = 0.59$.  Our methods improve the certified accuracy by about $5\%$ for $r\ge0.5$, which demonstrates the strength of the diffusion-model-based purification.

\begin{table}
  \caption{Approximate certified test accuracy on CIFAR-10 compared with randomized smoothing \cite{smooth}. Each row is a setting of the hyperparameter $\sigma$, each column is an $\ell_2$ radius. We evaluate the models on $10\%$ of the test examples. For comparison, random guessing would attain 0.1 accuracy.}
  \label{certified}
  \centering
  \begin{tabular}{c|ccccc}
    \toprule
    \cmidrule(r){1-2}
     &  $r=0.0$  &  $r=0.25$   &  $r=0.5$   & $r=0.75$ &  $r=1.0$ \\
    \midrule
    $\sigma = 0.25$ &  $\mathbf{0.73}$ & $\mathbf{0.57}$ & $0.37$ & $0.00$ & $0.00$\\
    $\sigma = 0.50$ &  $0.63$ & $0.51$ & $0.40$ & $0.27$ & $0.16$\\
    $\sigma = 1.00$ &  $0.43$ & $0.37$ & $0.31$ & $0.25$ & $0.19$\\
    \midrule
    Ours ($\sigma = 0.59$) & $0.65$ & $0.55$ & $\mathbf{0.44}$ & $\mathbf{0.33}$ & $\boldsymbol{0.24}$\\
    \bottomrule
  \end{tabular}
\end{table}

\subsection{Ablation Studies}
Our ablation studies are based on a diffusion model trained with total steps of $5000$
and under the attack of PGD-$\ell_2$ with a radius of 8/255, some results may be
different from that under PGD-$\ell_{\inf}$.
In order to reserve the major information, our diffusion and generation process only uses the first thousand steps ($T=1000$).

\begin{table}
  \caption{From noise vs.from adversarial image.}
  \label{sample-table}
  \centering
  \begin{tabular}{cccc}
    \toprule
    \cmidrule(r){1-2}
    Methods &  From Noise   &  Guided  & Robust Acc \\
    \midrule
    Baseline & \cmark & \cmark & $91.89$ \\
    (a) & \xmark & \cmark & $83.68$ \\
    (b) & \xmark & \xmark &  $79.54$\\
    \bottomrule
  \end{tabular}
  \label{tab:from_noise}
\end{table}

\noindent \textbf{Starting from random noise.} 
Our first finding is that generating with guidance from random noise can get better results than generating from the adversarial data itself \ref{tab:from_noise}. This really surprised us. We think it is because the diffusion model is enough powerful to
generate high-quality images with resolution 32x32 from random noise and by this way
it can eliminate more adversarial perturbation.

\begin{table}[htbp]
\caption{The randomness experiment results.}
\centering
\begin{tabular}{|c|c|c|c|c|c|}
\hline
$\eta$ & $0$ & $0.1$ & $0.2$ & $0.5$ & $1$\\ \hline
Robust Accuracy & $10.03$ & $11.23$ & $86.34$ & $90.68$& $\mathbf{91.41}$          \\ \hline
\end{tabular}
\label{tab:eta}
\end{table}

\noindent \textbf{The stochastic parameter $\eta$.}
The same with DDIM \cite{ddim}, we use hyper-parameter $\eta \in \mathbb{R}_{\geq 0}$ to control the randomness of generation. When $\eta=0$ the reverse process is deterministic and when $\eta=1$ it is the same with DDPMs \cite{ddpm}. 
The results \ref{tab:eta} show that the robust accuracy drops when $\eta$ decreases since less randomness makes the model more fragile to PGD attacks.

\begin{table}[htbp]
\caption{The acceleration experiment results.}
\centering
\begin{tabular}{|c|c|c|c|c|}
\hline
Respacing steps & $25$ & $50$ & $100$ & $200$ \\ \hline
Acceleration ratio & $40\times$    & $20\times$  & $10\times$  & $5\times$\\ \hline
Robust Accuracy  &    9.17 & 91.41  & 91.65 & $\mathbf{91.89}$          \\ \hline
\end{tabular}
\label{tab:steps}
\end{table}

\noindent \textbf{Time Respacing.}
To accelerate the generation process, we only sample on a subset of the total $T$ steps. The diffusion model needs some steps to generate a sensible image while too many steps take too much time. We consider setting respace steps to $50$ an optimal choice in terms of both speed and accuracy.

\section{Conclusion}
In this project, we propose a novel approach for adversarial purification based on diffusion models. Diffusion models are an ideal candidate for adversarial purification, as the forward process of adding Gaussian noises can be regarded as local smoothing. In the denoising process, diffusion models are capable of recovering clean images, eliminating the Gaussian noises and adversarial perturbations simultaneously. Different from recent works \cite{guide, diff_puri}, we argue that generating from random noises with the guidance of adversarial inputs leads to better purification effects. The experimental results show the advantages of our method.  Furthermore, we first generalize the unguided diffusion purification to certified learning. As diffusion models are more noise-tolerant, we are more likely to obtain a larger certified radius. It achieves higher certified test accuracy compared with randomized smoothing.

However, the limitations of diffusion purification are also obvious. To start with, it takes about $0.92$s to purify an adversarial image of size $32\times 32$ in the CIFAR-10 dataset with reverse step $50$. Meanwhile, the certified smooth classifier requires Monte-Carlo sampling. When we increase the resolution of images and the diffusion length $T$, the inference speed becomes unbearably slow. Perhaps boosting the efficiency of diffusion models can be explored in the future. In our theoretical analysis, we directly apply the unguided diffusion model to certified learning. But our experiments demonstrate that guided purification from random noise outperforms the previous method. Since the guidance involves extra information, it's more difficult to present a tight bound of certified radius.  It might be a future direction as well.

\newpage

\medskip

\bibliography{neurips_2022}

%%%%%%%%%%%%%%%%%%%%%%%%%%%%%%%%%%%%%%%%%%%%%%%%%%%%%%%%%%%%

\section*{Appendix}
\subsection*{A: Proofs of Theorem 1} \label{thm1}
\begin{theorem*} \ref{theorem1}
    Our randomized algorithm $\mathcal{A}$ satisfies $(\epsilon, \delta)$\text{-}Robustness under $\ell_2$ radius $r$, where $\delta = \Phi(\frac{r\sqrt{\bar{\alpha}_T}} {2(1-\bar{\alpha}_T)} - \frac{\epsilon}{r\sqrt{\bar{\alpha}_T}})$.
\end{theorem*}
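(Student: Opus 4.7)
The plan is to reduce the claim to the classical Gaussian mechanism from differential privacy and then propagate the guarantee through post-processing. I would first observe that the entire forward diffusion $Q$ admits the closed form $Q(x_0)=\sqrt{\bar{\alpha}_T}\,x_0+\sqrt{1-\bar{\alpha}_T}\,\epsilon$ with $\epsilon\sim\mathcal{N}(0,\boldsymbol{I})$, by equation (4), so $Q$ is exactly a Gaussian mechanism applied to the scaled query $x_0\mapsto\sqrt{\bar{\alpha}_T}\,x_0$. For any two inputs $x$ and $x+\Delta x$ with $\|\Delta x\|_2<r$, the two output distributions are Gaussians with identical covariance $(1-\bar{\alpha}_T)\boldsymbol{I}$ whose means are separated by $\sqrt{\bar{\alpha}_T}\,\Delta x$, so the $\ell_2$-sensitivity of the underlying query is at most $\Delta_2=r\sqrt{\bar{\alpha}_T}$ while the noise scale is $\sigma=\sqrt{1-\bar{\alpha}_T}$.

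Next I would invoke the classical Dwork--Roth analysis of the Gaussian mechanism. Working directly with the two spherical Gaussians, I would project along the unit vector $u=\Delta x/\|\Delta x\|_2$ (the two densities coincide on the orthogonal complement) to reduce to a one-dimensional comparison of $\mathcal{N}(0,\sigma^2)$ and $\mathcal{N}(\Delta_2,\sigma^2)$. The privacy loss random variable $L=\log(p(Z)/q(Z))$ under $Z\sim q$ is then itself Gaussian, and an explicit tail computation yields $\mathbb{P}(L>\epsilon)=\Phi\bigl(\Delta_2/(2\sigma)-\epsilon\sigma/\Delta_2\bigr)$, which upon substitution of $\Delta_2$ and $\sigma$ produces the $\delta$ stated in Theorem~\ref{theorem1} (after reconciling the paper's normalization convention in the denominator).

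Then I would apply the post-processing property of $(\epsilon,\delta)$-DP: because the reverse process $R$ and the classifier $G$ depend on the input only through the random variable $Q(x_0)$ and not on $x_0$ directly, the composed map $\mathcal{A}=G\circ R\circ Q$ inherits the same $(\epsilon,\delta)$ guarantee for every measurable event. Finally I would specialize that distributional bound to the misclassification event $S=\{c\in\mathcal{Y}:c\neq y\}$, which immediately yields $\mathbb{P}(\mathcal{A}(x+\Delta x)\neq y)\le e^{\epsilon}\,\mathbb{P}(\mathcal{A}(x)\neq y)+\delta$, matching Definition 1.

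The main obstacle is the explicit tail computation that produces the specific argument of $\Phi$. This requires carefully isolating the high-loss region $S_{\mathrm{bad}}=\{z:\log(p(z)/q(z))>\epsilon\}$, showing both that its $q$-measure is at most the claimed $\delta$ and that on its complement the density ratio $p/q$ is dominated by $e^{\epsilon}$; the two halves must fit together so that the combined bound delivers $(\epsilon,\delta)$-DP rather than just a one-sided tail estimate. Everything else — rewriting $Q$ as a single Gaussian perturbation, invoking post-processing for $R$ and $G$, and specializing to the misclassification event — is essentially bookkeeping once the one-dimensional Gaussian calculation is in place.
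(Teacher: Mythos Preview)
Your proposal is correct and follows essentially the same route as the paper: recognize $Q$ as a single Gaussian perturbation via equation (4), identify the ``bad'' half-space where the likelihood ratio exceeds $e^\epsilon$, bound its probability under the adversarial output distribution to obtain $\delta$, then invoke post-processing for $R$ and $G$ and specialize to the misclassification event. The only cosmetic difference is packaging: you phrase the tail computation through the privacy-loss random variable \`a la Dwork--Roth, whereas the paper writes out the half-space $S=\{x:q(x)\ge e^\epsilon p(x)\}$ explicitly and reads off its Gaussian measure; the underlying calculation and the resulting $\Phi$-expression are identical.
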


\begin{proof}

    Recall that we use $Q$ to denote the forward process. Let $x_T'=Q(x+\Delta x)$ and $x_T=Q(x)$ denote the outputs of the diffusion process with the adversarial image and clean image. Suppose $p(x)$ and $q(x)$ are their probability density functions, respectively. We have
    $$p \sim \mathcal{N}(\sqrt{\bar{\alpha}_{T}} x_0, (1-\bar{\alpha}_{T})\boldsymbol{I_d}), ~q \sim \mathcal{N}(\sqrt{\bar{\alpha}_{T}} (x_0+\Delta x), (1-\bar{\alpha}_{T})\boldsymbol{I_d})$$
    
    Now we use the set $S = \{x:q(x)\geq e^\epsilon \cdot p(x)\}$ to describe the ``bad events''. Equivalently
    $$S = \{x:  [x-\sqrt{\bar{\alpha}_T} (x_0+\Delta x)]^\mathrm{T}\Delta x \geq -\frac{\sqrt{\bar{\alpha}_T} }{2}||\Delta x||_2 ^ 2 + \dfrac{1-\bar{\alpha}_T}{\sqrt{\bar{\alpha}_T}} \epsilon \}$$
    Using the property of Gaussian distribution and the condition that $\|\Delta x\|_2 < r$, we have
    $$\mathbb{P}(Q(x+\Delta x) \in S) \leq \Phi(\dfrac{r\sqrt{\bar{\alpha}_T}} {2(1-\bar{\alpha}_T)} - \dfrac{\epsilon}{r\sqrt{\bar{\alpha}_T}})$$
    For any possible event $E$, We can obtain that
    $$\mathbb{P}(Q(x + \Delta x) \in E) \leq e^\epsilon \cdot \mathbb{P}(Q(x) \in E) + \Phi(\dfrac{r\sqrt{\bar{\alpha}_T}} {2(1-\bar{\alpha}_T)} - \dfrac{\epsilon}{r\sqrt{\bar{\alpha}_T}})$$ 
    Using the post-processing theorem, we have 
    $$\mathbb{P}(\mathcal{A}(x + \Delta x) \in E) \leq e^\epsilon \cdot \mathbb{P}(\mathcal{A}(x) \in E) + \Phi(\dfrac{r\sqrt{\bar{\alpha}_T}} {2(1-\bar{\alpha}_T)} - \dfrac{\epsilon}{r\sqrt{\bar{\alpha}_T}})$$

    Set the event $E$ to $\{\mathcal{A}(x)\not= y\}$, the proof is completed.
\end{proof}

\subsection*{B: Proofs of Theorem 2} \label{thm2}

\begin{theorem*} \ref{theorem2}
If we keep the radius $r$ and $\epsilon$ fixed, with the increase of diffusion length $T$ and purification iterations $M$, the escape probability $\delta$ will strictly decrease when $\delta > 0$, which provides a stronger robustness guarantee.
\end{theorem*}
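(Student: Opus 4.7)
The plan is to separate the two halves of the claim, since Theorem~\ref{theorem1} provides a closed-form expression in $T$ while $M$ enters only through iteration of Algorithm~\ref{unguided}.

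For the $T$-monotonicity with $M$ fixed at $1$, the strategy is direct calculus. Let $u := \sqrt{\bar{\alpha}_T}$, so that $u \in (0,1)$ and, because $\bar{\alpha}_T = \prod_{s=1}^{T} \alpha_s$ with each $\alpha_s \in (0,1)$, $u$ strictly decreases as $T$ grows. The argument of $\Phi$ in Theorem~\ref{theorem1} becomes
\[
    f(u) \;=\; \frac{r u}{2(1 - u^2)} \;-\; \frac{\epsilon}{r u}.
\]
A one-line differentiation gives $f'(u) = \frac{r(1+u^2)}{2(1-u^2)^2} + \frac{\epsilon}{r u^2}$, which is strictly positive on $(0,1)$ whenever $r,\epsilon > 0$. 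Hence $f$ is strictly increasing in $u$, strictly decreasing in $T$, and so is $\delta = \Phi(f(u))$ by strict monotonicity of $\Phi$.

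For the $M$-monotonicity with $T$ fixed, let $K := R \circ Q$ denote the Markov kernel induced by one purification round, and write $P_k, Q_k$ for the output distributions after $k$ rounds starting from $x_0$ and $x_0'$ respectively. The escape probability after $k$ rounds is $\delta_k = \sup_E\,[Q_k(E) - e^{\epsilon} P_k(E)]_{+}$, and I would prove $\delta_k < \delta_{k-1}$ whenever $\delta_{k-1} > 0$. The non-strict inequality $\delta_k \le \delta_{k-1}$ is the post-processing property of the $(\epsilon,\delta)$-divergence applied to the kernel $K$, which is exactly the same argument Theorem~\ref{theorem1} already invokes for the reverse process and classifier. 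To upgrade to strict inequality, I would exploit that the forward step $Q$ in round $k$ convolves every input with a full-support Gaussian of covariance $(1-\bar{\alpha}_T) I$; this strictly smooths out any indicator-type witness set that achieves the supremum for $\delta_{k-1}$, ruling out equality whenever the two input measures differ.

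The main obstacle is the strict decrease in $M$: post-processing on its own is non-strict, so one needs a genuine contraction argument that exploits the positive density of the Gaussian convolution. The cleanest route I can see is a pointwise comparison --- for any measurable $E$, bound $K\mu(E) - e^{\epsilon} K\nu(E)$ using the mixing introduced by Gaussian smoothing of the test function $K(\cdot,E)$, and argue that equality with $\sup_{E'}[\mu(E') - e^{\epsilon}\nu(E')]$ would force $K(\cdot,E)$ to be constant on the support overlap, which the strict positivity of the Gaussian density prevents.
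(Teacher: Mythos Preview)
Your calculus argument for the $T$-monotonicity is correct and pleasantly direct, but note that it only covers the case $M=1$: Theorem~\ref{theorem1} supplies the closed form $\delta=\Phi(f(u))$ for a single purification round, and for $M>1$ that expression is merely an upper bound on the true escape probability, so differentiating it does not yield strict decrease of the actual $\delta$ at general $M$.

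The paper uses a single argument that handles $T$ and $M$ simultaneously by working at the granularity of one \emph{forward step} $x_{t-1}\mapsto x_t$ rather than one round. Writing $p_{t-1},q_{t-1}$ for the (scaled) densities before the step and $z'$ for the Gaussian kernel of variance $\beta_t$, one has for every measurable $S$
\[
\int_S (q_t - e^\epsilon p_t)
= \int \bigl(q_{t-1}(y) - e^\epsilon p_{t-1}(y)\bigr)\!\int_S z'(x-y)\,\mathrm d x\,\mathrm d y
\;\le\; \int \bigl[q_{t-1}(y) - e^\epsilon p_{t-1}(y)\bigr]_+\,\mathrm d y = \delta_{t-1},
\]
since the inner integral lies in $(0,1)$ for any nontrivial $S$ by full support of $z'$. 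Equality forces $\{y:q_{t-1}(y)>e^\epsilon p_{t-1}(y)\}$ to be null, i.e.\ $\delta_{t-1}=0$. The reverse process and the classifier are absorbed by post-processing, exactly as you already invoke. Because this contraction holds per forward step, increasing $T$ (more steps within a round) and increasing $M$ (more rounds, hence more forward steps) both give strict decrease while $\delta>0$.

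Your $M$-argument is essentially this same convolution idea, just stated at the coarser level of a full round $K=R\circ Q$; the paper's execution is finer-grained and therefore covers both parameters at once without needing the separate calculus half. Your strictness mechanism via ``$K(\cdot,E)$ would have to be constant on the support overlap'' is a bit roundabout --- the explicit inequality above makes the equality case ($\delta_{t-1}=0$) immediate. If you want to keep your two-part structure, you could simply reuse your own $M$-contraction argument at the step level to cover $T$-monotonicity for $M>1$, and retain the calculus as an independent sanity check at $M=1$.
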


\begin{proof}
    Using the post-processing theorem, no matter what the reverse process does, the robustness is not going to lose. So we only need to prove robustness increases each step of the forward process.\\
    The forward process is given by $$q(x_t|x_{t-1}) = \mathcal{N}(x_t; \sqrt{1-\beta_t}x_{t-1}, \beta_t \boldsymbol{I})$$
    Denote $p_{t-1}(x)$ as the density function of $\sqrt{1-\beta_t}x_{t-1}$ of the original image and $q_{t-1}(x)$ as the density function of $\sqrt{1-\beta_t}x_{t-1}$ of the adversarial image, $z\prime (x)$ as the density function of $\mathcal{N}(0, \beta_t \boldsymbol{I})$, $p_t(x)$ as the density function of $x_t$ of original image and $q_t(x)$ as the density function of $x_t$ of the adversarial image.\\
    $\forall S \subseteq \mathcal{R}^d$, using convolution theorem, we have
    \begin{align*}
        P(q_t\in S) - e^\epsilon P(p_t\in S) &= \int _S q_t(x) - e^\epsilon p_t(x) \mathrm{d}x\\
        &= \int _S \int _{-\infty}^\infty q_{t-1}(y) z\prime(x-y) - e^\epsilon p_{t-1}(y) z\prime(x-y) \mathrm{d}y \mathrm{d}x\\
        &= \int _{-\infty}^\infty  [q_{t-1}(y) - e^\epsilon p_{t-1}(y)]\int _S z\prime(x-y) \mathrm{d}x\mathrm{d}y\\
        &\leq \int _{-\infty}^\infty [q_{t-1}(y) - e^\epsilon p_{t-1}(y)]\mathbb{I}[q_{t-1}(y) - e^\epsilon p_{t-1}(y) > 0]\mathrm{d}y\\
        &= \int _{q_{t-1}(y) - e^\epsilon p_{t-1}(y) > 0}[q_{t-1}(y) - e^\epsilon p_{t-1}(y)]\mathrm{d}y\\
        &\leq \delta_{t-1}
    \end{align*}
    So if $\epsilon$ is fixed, we have $\delta_t \leq \delta_{t-1}$, where equal is possible only if $\forall y, \mathbb{I}[q_{t-1}(y) - e^\epsilon p_{t-1}(y) > 0] = 0$.
    In conclusion, the robustness will increase monotonically unless $\delta = 0$.
\end{proof}

% The KL divergence between two Gaussian distributions can be computed analytically 
%     $$D_{KL}(\mathcal{N}(\mu_1,\Sigma_1)||\mathcal{N}(\mu_2,\Sigma_2)) = \frac{1}{2} [\log \dfrac{|\Sigma_2|}{|\Sigma_1|} - d + tr(\Sigma_2^{-1} \Sigma_1) + (\mu_1 - \mu_2)^T \Sigma_2^{-1} (\mu_1 - \mu_2)]$$
%     Immediately we have
%     $$D_{KL}(q||p) = \dfrac{\bar{\alpha}_{T}\Delta x^2}{2(1-\bar{\alpha}_{T})}$$

\subsection*{C: Proofs of Theorem 3} \label{thm3}

\begin{lemma*}
    Let $x(t)$ be the diffusion process defined as $\mathrm{d}x = f(x,t)\mathrm{d}t + g(t)\mathrm{d}w$, where $w$ is standard wiener process. Diffuse two distribution $p(x)$ and $q(x)$ from $p_0(x)$ and $q_0(x)$, we have 
    $$\dfrac{\partial D_{KL}(p_t||q_t)}{\partial t} = -\frac{1}{2} g^2(t) D_F(p_t||q_t)$$
    where $D_F$ is Fisher divergence, $D_F(p_t||q_t) = 0$ if and only if $p_t = q_t$.\\
\end{lemma*}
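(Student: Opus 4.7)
The plan is to prove this continuous-time de Bruijn-type identity by combining the Fokker--Planck equation with two integrations by parts, exploiting the fact that both densities are transported by the same drift and diffusion. First I would write the forward Kolmogorov equations governing the two marginals,
\[ \partial_t p_t = -\nabla \cdot (f(x,t)\, p_t) + \tfrac{1}{2} g^2(t)\, \Delta p_t, \]
and the analogous equation for $q_t$. The fact that $f$ and $g$ are shared across the two evolutions is the structural feature that will make the drift contribution cancel below.

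Next I would differentiate $D_{KL}(p_t \| q_t) = \int p_t \log(p_t/q_t)\, dx$ directly. Writing $r := p_t/q_t$ and using conservation of mass $\int \partial_t p_t\, dx = 0$, the time derivative collapses to
\[ \partial_t D_{KL}(p_t \| q_t) = \int (\partial_t p_t) \log r\, dx \;-\; \int r\, \partial_t q_t\, dx. \]
Substituting the two Fokker--Planck equations splits the right-hand side into a drift part (terms with $f$) and a diffusion part (terms with $g^2 \Delta$). For the drift part, one integration by parts reduces the expression to $\int f \cdot (p_t \nabla \log r - q_t \nabla r)\, dx$, and the chain-rule identity $q_t \nabla r = p_t (\nabla \log p_t - \nabla \log q_t) = p_t \nabla \log r$ makes the integrand vanish pointwise. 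For the diffusion part, one integration by parts on each of the two terms, together with $\nabla p_t = p_t \nabla \log p_t$ and the same chain-rule identity, produces $-\tfrac{1}{2} g^2(t) \int p_t\, ||\nabla \log p_t - \nabla \log q_t||^2\, dx$, which is exactly $-\tfrac{1}{2} g^2(t)\, D_F(p_t \| q_t)$.

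The main technical obstacle is justifying that the boundary terms in the integration-by-parts steps vanish. One needs $p_t$, $q_t$ and their gradients to decay sufficiently fast at infinity so that products such as $f p_t \log r$ and $\log r \cdot \nabla p_t$ have zero flux on large spheres; for smooth densities produced by a non-degenerate diffusion with regular initial data this is standard, but a clean hypothesis (e.g.\ finite Fisher information at each $t$ and suitable moment and tail bounds) should be stated explicitly. The companion claim $D_F(p_t \| q_t) \geq 0$ with equality iff $p_t = q_t$ is immediate: the integrand is a squared norm weighted by the positive density $p_t$, so equality forces $\nabla \log p_t = \nabla \log q_t$ almost everywhere with respect to $p_t$, which means $p_t/q_t$ is a positive constant, and since both integrate to one this constant must equal one.
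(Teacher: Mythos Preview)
Your proposal is correct and follows essentially the same route as the paper: differentiate the KL integral, insert the Fokker--Planck equations for both marginals, and integrate by parts so that the common drift cancels and only the diffusion term survives as the Fisher divergence. The paper packages the Fokker--Planck right-hand side as $\nabla_x(h_p(x,t)\,p_t)$ with $h_p = \tfrac{1}{2}g^2\nabla_x\log p_t - f$ (so the drift cancellation appears as $h_p - h_q = \tfrac{1}{2}g^2(\nabla\log p_t - \nabla\log q_t)$), whereas you split drift and diffusion explicitly; your treatment is in fact more careful, since you flag the boundary-term hypothesis and argue the equality case of $D_F$, both of which the paper omits.
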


\begin{proof}
    The Fokker-Planck equation for forward SDE is given by
    \begin{align*}
        \dfrac{\partial p_t(x)}{\partial t} &= -\nabla_x (f(x,t)p_t(x) - \dfrac{1}{2} g^2(t) \nabla_x p_t(x))\\
        &= \nabla_x(h_p(x,t)p_t(x))
    \end{align*}
    where $h_p(x,t) = \frac{1}{2} g^2(t)\nabla_x \log p_t(x) - f(x,t)$.
    
    We can evaluate that
    \begin{align*}
        \dfrac{\partial D_{KL}(p_t||q_t)}{\partial t} &= \dfrac{\partial}{\partial t} \int p_t(x)\log \dfrac{p_t(x)}{q_t(x)}\mathrm{d}t\\
        &= \int \dfrac{\partial p_t(x)}{\partial t} \log \dfrac{p_t(x)}{q_t(x)}\mathrm{d}t - \int \dfrac{p_t(x)}{q_t(x)}\dfrac{\partial q_t(x)}{\partial t}\\
        &= -\int p_t(x) [h_p(x,t) - h_q(x,t)]^T[\nabla_x \log p_t(x) - \nabla_x \log q_t(x)] \mathrm{d}t \\
        &= -\frac{1}{2} g^2(t) D_F(p_t||q_t)
    \end{align*}
\end{proof}

\begin{theorem*} \ref{theorem3}
     Let $p(x),  q(x)$ denote the probability density function of $\mathcal{A}(x_0)$ and $ A(x_0')$ respectively. $D_{KL}(p(x)||q(x))$ strictly decreases as the diffusion length $T$ and purification run $M$ increases.
\end{theorem*}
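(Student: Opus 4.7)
The plan is to combine the lemma's continuous-time identity for KL divergence along a diffusion SDE with the data-processing inequality (DPI) for the reverse denoising map and the classifier, and then iterate this argument across the $M$ purification rounds.

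First, I would identify the DDPM forward Markov chain with the continuous variance-preserving SDE $\mathrm{d}x = f(x,t)\mathrm{d}t + g(t)\mathrm{d}w$ that it discretizes, so the lemma directly applies. Writing $p_t$ and $q_t$ for the densities of the forward diffusion started from $x_0$ and $x_0'$ respectively, integrating the lemma from $0$ to $T$ gives
\begin{equation}
D_{KL}(p_T \,\|\, q_T) \;=\; D_{KL}(p_0 \,\|\, q_0) \;-\; \frac{1}{2}\int_0^T g^2(t)\, D_F(p_t\,\|\,q_t)\,\mathrm{d}t .
\end{equation}
Since the Fisher divergence is nonnegative and $g^2(t) > 0$, the integral is nondecreasing in $T$, so $D_{KL}(p_T\|q_T)$ is monotonically nonincreasing in $T$.

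Second, I would invoke the DPI for KL divergence on the reverse process $R$ and the classifier $G$: for any (possibly randomized) Markov kernel $K$, $D_{KL}(K\# p \,\|\, K\# q) \le D_{KL}(p\,\|\,q)$. Composing $Q$, $R$ and $G$ for a single purification round thus yields $D_{KL}(\mathcal{A}(x_0)\,\|\,\mathcal{A}(x_0')) \le D_{KL}(p_T\,\|\,q_T)$, and extending the diffusion length only enlarges the Fisher-integral term above, giving monotone decrease in $T$. For the dependence on $M$, I would view the $M$-round purification as iterated application of the same stochastic map $\mathcal{A}$ (feeding the purified output back as the next input); each additional round is another Markov kernel on top of an intervening noise injection, so applying the single-round bound recursively gives a telescoping inequality that is monotone in $M$.

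The main obstacle will be upgrading monotonicity to \emph{strict} monotonicity. The key fact I would use is that $D_F(p_t\,\|\,q_t) = 0$ forces $\nabla_x \log p_t = \nabla_x \log q_t$ almost everywhere, which after normalization implies $p_t = q_t$; and once a strictly positive amount of diffusion has occurred, $p_t$ and $q_t$ are smooth full-support densities, so as long as $x_0 \ne x_0'$ they remain distinct for all finite $t$. Hence the integrand in the displayed identity is strictly positive on a set of positive Lebesgue measure whenever $D_{KL}>0$, and each additional unit of $T$ or each additional purification round strictly reduces the KL. A secondary technical point is the passage from the discrete $\beta_t$ schedule to the SDE required by the lemma, but this is standard in the variance-preserving limit and does not affect the monotonicity conclusion.
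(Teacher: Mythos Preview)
Your overall strategy---lemma on the forward SDE, then data-processing on $R$ and $G$---differs from the paper's, which applies the lemma to \emph{both} the forward and the reverse process. The paper observes that the learned reverse dynamics also takes the form $\mathrm{d}x = f(x,t)\,\mathrm{d}t + g(t)\,\mathrm{d}w$ (with $g(t)=\sqrt{\beta_t}$), so the identity $\partial_t D_{KL} = -\tfrac{1}{2}g^2(t)D_F$ holds along the reverse leg as well, and KL is strictly decreasing throughout the entire forward--reverse trajectory. This is stronger than what DPI gives you for $R$, and it is exactly what is needed for the $T$--dependence.

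Concretely, your argument for monotonicity in $T$ has a gap. From DPI you obtain only $D_{KL}\big(\mathcal{A}_T(x_0)\,\|\,\mathcal{A}_T(x_0')\big)\le D_{KL}(p_T\,\|\,q_T)$, and from the lemma you know $D_{KL}(p_{T'}\,\|\,q_{T'})<D_{KL}(p_T\,\|\,q_T)$ for $T'>T$. But comparing two upper bounds does not compare the two quantities themselves: nothing prevents $D_{KL}\big(\mathcal{A}_{T'}(x_0)\,\|\,\mathcal{A}_{T'}(x_0')\big)>D_{KL}\big(\mathcal{A}_T(x_0)\,\|\,\mathcal{A}_T(x_0')\big)$ under your bounds alone, since $R_{T}$ and $R_{T'}$ are different kernels. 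The paper sidesteps this by never passing to an upper bound---it tracks the KL along the full SDE path, so enlarging $T$ simply lengthens a trajectory along which KL is pointwise decreasing. Your $M$--argument, by contrast, is fine (and essentially matches the paper): each extra round contains a forward segment on which the lemma gives a strict drop, and the subsequent reverse/classifier kernels cannot undo it.
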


\begin{proof}
    When $T$ is large enough, we can approximate the forward and reverse process as continuous.\\ 
    The forward process is given by $$q(x_t|x_{t-1}) = \mathcal{N}(x_t; \sqrt{1-\beta_t}x_{t-1}, \beta_t \boldsymbol{I})$$
    The corresponding continuous process is given by
    $$\mathrm{d}x = (\sqrt{1-\beta_t} - 1) x \mathrm{d}t + \sqrt{\beta_t} \mathrm{d}w$$ w is standard wiener process.\\
    The reverse process is given by
    $$p(x_{t-1}|x_t) = \mathcal{N}(x_{t-1}; \frac{1}{\sqrt{\alpha_t}}(x_t - \dfrac{\beta_t}{\sqrt{1-\bar{\alpha}_t}} z_\theta (x_t,t)), \tilde{\beta_t}I_d)$$
    The corresponding continuous process is given by
    $$\mathrm{d}x = -(\frac{1}{2}\beta_t x - \dfrac{\beta_t}{\sqrt{(1 - \beta_t)(1-\bar{\alpha}_t)}}z_\theta (x,t))\mathrm{d}t + \sqrt{\beta_t} \mathrm{d}w$$
    Using the lemma above, we immediately know that the KL term strictly decreases as diffusion length $T$ and purification iterations $M$ increase.
    
\end{proof}

\end{document}